\documentclass[twoside,11pt]{article}
\usepackage{jagi}
\usepackage{mathrsfs}
\usepackage{amsmath}
\newtheorem{question}[theorem]{Question}

\jagiheading{10}{1}{24--45}{2019}{2019-07-30}{2019-10-19}{10.2478/jagi-2019-0003}
{S.\ Alexander}

\ShortHeadings{Intelligence via ultrafilters}{S.\ Alexander}

\author{\name Samuel Allen Alexander
\email samuelallenalexander@gmail.com \\
\addr Quantitative Research Analyst\\
The U.S.\ Securities and Exchange Commission\\
New York Regional Office
}

\editor{Florin Popescu}

\title{Intelligence via ultrafilters: structural properties of some intelligence
comparators of deterministic Legg-Hutter agents}

\begin{document}

\maketitle

\begin{abstract}
    Legg and Hutter, as well as subsequent authors, considered intelligent agents
    through the lens of interaction with reward-giving
    environments, attempting to
    assign numeric intelligence measures to such agents, with the guiding
    principle that a more intelligent agent should gain higher rewards
    from environments in some aggregate sense.
    In this paper, we consider a related question: rather than measure
    numeric intelligence of one Legg-Hutter agent, how can we compare the
    relative intelligence
    of two Legg-Hutter agents? We propose an elegant answer based on
    the following insight:
    we can view Legg-Hutter agents as candidates in an election, whose voters are
    environments, letting each environment vote (via its rewards) which
    agent (if either) is more intelligent. This leads to an abstract
    family of comparators simple enough that we can prove
    some structural theorems about them. It is an open question whether
    these structural theorems apply to more practical intelligence measures.
\end{abstract}

\section{Introduction}

Who is more intelligent, $A$ or $B$? This is a paper about
how to formalize a definition of the relative intelligence of
agents who interact deterministically with deterministic
environments based on observations and
rewards\footnote{Essentially, the agents and environments
of \emph{general reinforcement learning}, but with a more universal
point of view.}
(formalized below).
Starting with the landmark paper of \citet{hutter2007}, various
attempts \citep{hernandez, hibbard, legg2013approximation, gavane} have been
made to formalize intelligence measures of such
agents\footnote{Authors differ on whether or not to require determinism.}.
We will refer to such an agent as a \emph{deterministic Legg-Hutter agent},
or \emph{DLHA} for short, to distinguish it from other agent notions.
DLHAs' intelligence can be defined in
various ways using real numbers (for example, by
means of infinite series involving Kolmogorov complexities).
The underlying principle is that a DLHA with higher intelligence should
earn higher rewards from environments, in some aggregate sense.
Thus, we are talking about a type of intelligence one would consider if
one were evaluating a general-purpose utility-maximizing AI for usage
in many well-defined, rational, emotionless tasks.

Aside from quantifying and comparing the intelligence of individual DLHAs, this field
of research has the potential to shed light on (at least certain aspects of)
intelligence itself,
abstracted over many DLHAs.
To that end, we intend to present a notion of the relative
intelligence of DLHAs, which notion exhibits
\emph{structural properties}. What we mean by
a ``structural property'' is roughly the following: if one knows something about the
intelligence of an agent $A$, a ``structural property'' is a theorem which allows one
to infer something about the intelligence of some agent $A'$ which is a variation of
$A$ in some sense.

Here is an example to make the notion of a ``structural property'' more
concrete. Suppose $A,B,A',B'$ are agents, and we have some way of letting
$A$ and $B$ (resp.~$A'$ and $B'$) ``team up'' into a new agent $C$ (resp.~$C'$).
It might seem intuitively plausible that if $A$ is more
intelligent than $A'$, and if $B$ is more intelligent than $B'$, then the team
$C$ obtained from $A$ and $B$ should be more intelligent than the team $C'$
obtained from $A'$ and $B'$. This property, ``higher-intelligence team-members
make higher-intelligence teams,'' if it held, would be an example of a structural property
of intelligence. Later in this paper, we will explore this property for
two specific notions of DLHA teamwork.

In the future, we hope that the investigation of structural properties of an
algebraic nature will help to reveal qualitative facts about intelligence.
As a precedent, one of the structural properties we prove (Proposition
\ref{trivialquitterproposition}) helped us realize that if environments can
give negative rewards, then the relative intelligence of different DLHAs
really ought to depend on how risk-averse we are; more on this below.

Rather than directly compute a real number intelligence measure for any
particular DLHA, we instead focus on the simpler problem of
how to determine whether one DLHA is more intelligent
than another\footnote{In more practical, less universal contexts, much work has
been done on comparing agents, see \cite{balduzzi} (section 1) for a collection
of references. Many practical methods for comparing agents turn out to be non-transitive,
but the abstract method we introduce is transitive (Lemma \ref{transitivityLemma}).}.

We will introduce an abstract intelligence comparison
based on the following insight:
when considering whether DLHA $A$ is more intelligent than DLHA $B$,
we can imagine that different environments are \emph{voters} voting in an election with
three candidates\footnote{We are not the first to apply election theory to unexpected
areas by reinterpreting the voters and candidates. For non-AI uses of this technique,
see \citep{okasha, stegenga}.}. Those three
candidates are: ``$A$ is more intelligent'',
``$B$ is more intelligent'', and ``$A$ and $B$ are equally intelligent''.
Arrow's impossibility theorem \citep{encyclopedia} would crush all hope of
easily\footnote{Technically speaking, the way environments vote in different elections
is constrained, with complicated relationships binding how
environments vote in different such elections. Thus, it is still possible that
other solutions to the voting problem might exist, since Arrow's impossibility theorem
assumes that
how a voter votes in one election is independent of how that voter votes
in a different election. But
because of the complicated nature of the constraints, I conjecture it
would be difficult to actually exploit this loophole.}
obtaining a
reasonable non-dictatorial solution in this manner if there were only finitely many
environments
casting votes, but that is not the case when there are infinitely many environments
casting votes.
We must instead turn to the infinite-voters version of
Arrow's impossibility theorem \citep{kirman} \citep[see also][]{fishburn},
which \emph{does} admit non-dictatorial solutions in terms of ultrafilters.

The paper is structured as follows:
\begin{itemize}
    \item Section \ref{preliminariessection} contains preliminaries:
    a formalization of the agent-environment model, and a definition of ultrafilters.
    \item
    In Section \ref{maindefinitionsection} we define a family of intelligence
    comparators---functions that compare intelligence of
    DLHAs---in terms of ultrafilters, and
    establish some basic lemmas about these comparators.
    \item
    In Section \ref{structuralpropertiessection}, we prove some structural
    properties about our intelligence comparators.
    \item
    In Section \ref{objectionsection}, we discuss some anticipated objections
    to our intelligence comparators and to the background framework.
    \item
    In Section \ref{openquestionsection}, we state some open questions about
    intelligence measures defined by other authors.
    \item
    In Section \ref{conclusionsection}, we summarize and make concluding remarks.
\end{itemize}

\section{Preliminaries}
\label{preliminariessection}

We will begin with a formal definition of deterministic Legg-Hutter agent
(DLHA) and of
deterministic environment.
We envision an agent being placed in an environment, where the agent
receives an initial numerical reward, receives an initial observation, takes an initial
action, receives a numerical reward for that action, receives an observation of how the
environment has changed, takes a second action, and so on \emph{ad infinitum}.
This high-level vision basically agrees with \citet{hutter2007}
and other authors,
except that we require both agent and environment to
be deterministic, whereas Legg and Hutter allowed both agent and environment to have
an element of randomness.
At first glance this vision might seem quite removed from real-world
agents, but the abstract numbers which we refer to as ``rewards'' can
stand in for any number of realistic quantities such as: amounts of gold
extracted from a mine; amounts of dopamine released by a brain; amounts of positive
ratings on a website; etc. Similarly, a single numerical ``observation'' can
encode any sort of real-world observation we can think of (just as any computer file
is really just a long string of binary, i.e., a single number).
This model seems flexible enough to accomodate many, if not all, practical uses
to which one would want to put a general-purpose AI.
For example, if we want the general-purpose AI to translate documents, we can concoct
an environment which incentivizes that; if we want the general-purpose AI to analyze
protein-folding, we can concoct an environment which incentivizes that; and so on.
To achieve good rewards across the universe of all environments,
such an AI would need to have (or appear to have) creativity (for those
environments intended to reward creativity),
pattern-matching skills (for those environments intended to reward pattern-matching),
ability to adapt and learn (for those environments which do not explicitly
advertise what things they are intended to reward, or whose goals change over time), etc.
See \citet{hutter2007} for further discussion of how this model relates
to machine intelligence (and for caveats about the agent-environment
model).

To simplify the mathematics\footnote{To be clear, the results in this paper
could be generalized so as to make this simplification unnecessary.
But if we went that route, all the
mathematics would be more tedious, and there would be little additional
insight gained. See \citet{hutter2007} for
justification and caveats about the decision to restrict to environments
with convergent reward sequences.}, we want to avoid the awkward situation where
the reward-sequence an agent receives adds up to $\infty$, or to $-\infty$,
or diverges (since we will allow negative rewards). There are different ways
to avoid this. For example, one
approach would be to require that the $n$th reward be bounded
between $\pm 2^{-n}$. For increased
generality\footnote{We will see in Section \ref{quittersSubSection} that the
more we restrict
the universe of environments, the more structural properties will result. We prefer
to initially lay the foundations as general as possible, and only specialize as
needed.},
we employ (in its debut appearance) a more
abstract approach: we will first define what we call a deterministic pre-environment,
with no such constraints, and then we will define a deterministic environment to be
a deterministic pre-environment whose reward sequences (for every arbitrary DLHA) converge.

\begin{definition}
\label{EnvironmentAndAgent}
    (Environment and Agent)
    \begin{enumerate}
        \item
        A \emph{deterministic pre-environment} is a function $e$ which takes as input a finite
        (possibly empty) sequence
        $a_1,\ldots,a_n$ of
        natural numbers, called \emph{actions}. It outputs a pair 
        $e(a_1,\ldots,a_n)=(r,o)$, where $r\in\mathbb R$ is called a \emph{reward}
        and $o\in\mathbb N$ is called an \emph{observation}.
        \item
        A \emph{deterministic Legg-Hutter agent} (or \emph{DLHA}) is
        a function $A$ which takes as input a finite sequence
        $r_1,o_1,\ldots,r_n,o_n$ of reward-observation pairs, and outputs an action
        $A(r_1,o_1,\ldots,r_n,o_n)$.
        \item
        Suppose $A$ is a DLHA and $e$ is a deterministic pre-environment.
        We define the \emph{reward-observation-action sequence
        determined by letting $A$ play in $e$} to be the infinite sequence
        $r_1,o_1,a_1,r_2,o_2,a_2,\ldots$ defined inductively as follows:
        \begin{itemize}
            \item
            The initial reward and initial observation,
            $r_1,o_1=e(\langle\rangle)$, are obtained by plugging the empty
            action-sequence into $e$.
            \item
            The initial action
            $a_1=A(r_1,o_1)$ is obtained by plugging $r_1,o_1$ into $A$.
            \item
            Assume $r_1,o_1,a_1,\ldots,r_i,o_i$ have been defined.
            We define $a_i$,
            the action which $A$ performs
            in response to reward-observation sequence $r_1,o_1,\ldots,r_i,o_i$,
            to be $a_i=A(r_1,o_1,\ldots,r_i,o_i)$.
            \item
            Assume $r_1,o_1,a_1,\ldots,r_i,o_i,a_i$ have been defined.
            We define $r_{i+1},o_{i+1}$, the reward and observation
            produced in response to action-sequence $a_1,\ldots,a_i$,
            to be $r_{i+1},o_{i+1}=e(a_1,\ldots,a_i)$.
        \end{itemize}
        \item
        A deterministic pre-environment $e$ is a \emph{deterministic environment} if,
        for every DLHA $A$, the \emph{total reward} $r_1+r_2+\cdots$
        which \emph{$A$ achieves
        when $A$ plays in $e$}, converges.
        \item
        If $A$ and $B$ are DLHAs and $e$ is a deterministic environment, we
        say $A$ \emph{outperforms}
        $B$ on $e$ if the total reward which $A$ achieves when $A$ plays in $e$, is larger than
        the total reward which $B$ achieves when $B$ plays in $e$.
        We define what it means for
        $A$ to \emph{underperform} $B$ on $e$, or for $A$ and $B$
        to \emph{perform equally well}
        on $e$, in similar ways.
    \end{enumerate}
\end{definition}

\begin{example}
    \begin{itemize}
        \item The function $A(r_1,o_1,\ldots,r_i,o_i)=0$ is a trivial DLHA which
        totally ignores the environment and blindly performs the same action
        over and over.
        \item The function $e(a_1,\ldots,a_i)=(1,0)$ is a trivial deterministic pre-environment
        which ignores a DLHA's actions, instead always blindly rewarding the DLHA
        with a reward of $r=1$, and never allowing any new observations. This
        deterministic pre-environment might be thought of as a sensory-deprivational paradise
        where DLHAs receive constant injections of reward from an immutable void.
        This deterministic pre-environment
        is \emph{not} a deterministic environment, because the total reward
        it grants to any DLHA diverges to $+\infty$.
        \item The function $e(a_1,\ldots,a_i)=(0,0)$ is a deterministic
        pre-environment similar to
        the previous one, except that it never grants DLHAs any reward. It is a
        deterministic environment because the total reward it grants
        any DLHA converges (to $0$).
        \item For a nontrivial example, choose a single-player video-game $V$ and 
        consider a function $e(a_1,\ldots,a_i)=(r_{i+1},o_{i+1})$
        where $o_{i+1}$ encodes the image displayed on screen
        by $V$ after the player presses buttons encoded by $a_1,\ldots,a_i$.
        Assuming $V$ displays a numerical score on the screen,
        we can take $r_{i+1}$ to be the currently displayed score minus the score which was
        displayed in $o_i$ (or minus $0$
        if $i=0$). This function $e$ is a deterministic pre-environment. There are various
        conditions on $V$ which would suffice to make $e$ an environment. For example,
        if $V$ cannot be played forever (so the score eventually always freezes);
        or if scores in $V$ never decrease, and have some limiting max score;
        etc.
    \end{itemize}
\end{example}

One might be tempted to wonder whether it is possible to find one single, particularly
clever deterministic environment such that each DLHA's intelligence can be defined
simply as the total reward it earns in that environment. The following lemma will show
that this is impossible: no single environment $e$ can, by itself, serve as a good
measure of intelligence.

\begin{lemma}
\label{inadequacylemma}
    (Inadequacy of any single deterministic environment)
    Let $e$ be any deterministic environment.
    For every DLHA $A$, there exists some DLHA $B$ such that $A$ and $B$ achieve
    the same total reward when they play in $e$, and yet $B$'s output never
    depends on which rewards or observations are input into it---loosely speaking,
    $B$ ignores everything.
\end{lemma}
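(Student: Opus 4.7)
The plan is to define $B$ by hard-coding the action sequence that $A$ itself would produce in $e$. More precisely, first let $A$ play in $e$ and let $r_1,o_1,a_1,r_2,o_2,a_2,\ldots$ be the resulting reward-observation-action sequence given by Definition \ref{EnvironmentAndAgent}. The key observation is that because $e$ is deterministic, the environment's side of this sequence depends only on the action history; $A$'s internal logic is only used to generate the $a_i$. So I would define $B$ by the rule $B(r_1',o_1',\ldots,r_n',o_n') := a_n$, with no dependence whatsoever on the primed inputs. This is a well-defined function on all finite reward-observation sequences, so $B$ is a legitimate DLHA, and by construction its output ignores everything it is given.

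Next I would verify that when $B$ is made to play in $e$, the resulting reward-observation-action sequence is exactly the same as $A$'s. This is a straightforward induction on $i$ using the four bullet points of Definition \ref{EnvironmentAndAgent}(3). The base case $r_1,o_1 = e(\langle\rangle)$ is identical for both agents since it only involves $e$. For the inductive step, if the first $i$ reward-observation pairs and first $i-1$ actions agree with $A$'s play, then $B$'s $i$th action is $a_i$ by definition, and then $r_{i+1},o_{i+1} = e(a_1,\ldots,a_i)$ is forced to match $A$'s next reward and observation as well. Hence $B$ receives the identical reward stream $r_1,r_2,\ldots$, and in particular the total reward of $B$ in $e$ equals that of $A$ in $e$ (and converges, confirming consistency with $e$ being an environment).

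There is essentially no hard step here; the only thing that required a moment of care was making sure $B$ is defined on \emph{every} possible reward-observation input (not merely the ones actually encountered), so that $B$ qualifies as a DLHA in the sense of Definition \ref{EnvironmentAndAgent}(2). Letting $B$ depend only on the length $n$ of its input handles this uniformly. The proof thus reduces to the slogan: in a deterministic environment, an agent's behavior along its own play trajectory can be faithfully mimicked by a blind agent that has memorized the action schedule in advance.
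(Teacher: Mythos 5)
Your proposal is correct and follows exactly the same approach as the paper: define $B$ to blindly output the $n$th action from $A$'s play trajectory in $e$, depending only on the input length, and observe that $B$'s play in $e$ then reproduces $A$'s reward stream. The only difference is that you spell out the inductive verification that the paper leaves as ``by construction,'' which is a harmless elaboration.
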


\begin{proof}
    Let $r^0_1,o^0_1,a^0_1,\ldots,r^0_i,o^0_i,a^0_i,\ldots$ be the
    reward-observation-action sequence determined by letting $A$ play on $e$.
    Define $B$ by $B(r_1,o_1,\ldots,r_i,o_i)=a^0_i$, note that
    $B(r_1,o_1,\ldots,r_i,o_i)$ only depends on $i$. Informally speaking, $B$
    completely ignores everything and instead blindly regurgitates the actions
    which $A$ takes on $e$. By construction, $B$ outputs the exact same actions
    as $A$ on the particular environment $e$, so $A$ and $B$ achieve the same
    total reward when they play in $e$.
\end{proof}

The proof of Lemma \ref{inadequacylemma} recalls the following words from
\citet{sanghi2003computer}: ``...the reason that I.Q.\ test success can be
automated with comparative ease is that administering an I.Q.\ test requires
little intelligence---it requires comparatively little more than
giving a list of questions with known answers.''
Lemma \ref{inadequacylemma} shows that if intelligence were measured by performance
on any single deterministic environment, then every DLHA would be exactly as intelligent
as some DLHA that ignores the environment---a property which clearly should not hold of
a reasonable intelligence measure.

In order to arrive at an abstract method of comparing the intelligence of different
DLHAs, we will consider deterministic environments to be voters who vote in an election.
The question is how to use these votes to decide the election.
Since there are infinitely many deterministic environments, that means there are
infinitely many voters,
which scenario is thoroughly investigated by \citet{kirman}.
For the infinite voter case, Kirman and Sondermann showed that
reasonable solutions are intimately
related to the following mathematical logical device:

\begin{definition}
\label{ultrafilterdefn}
    An \emph{ultrafilter on $\mathbb N$} (hereafter simply an \emph{ultrafilter})
    is a collection $\mathscr U$ of subsets of $\mathbb N$ satisfying the following requirements:
    \begin{itemize}
        \item
            (Properness) $\emptyset\not\in\mathscr U$.
        \item
            (Upward Closure) For every $X\in \mathscr U$ and
            every $X'\subseteq \mathbb N$, if
            $X'\supseteq X$, then $X'\in\mathscr U$.
        \item
            ($\cap$-closure) For every $X,Y\in\mathscr U$, the
            intersection $X\cap Y\in\mathscr U$.
        \item
            (Maximality) For every $X\subseteq \mathbb N$,
            either $X\in\mathscr U$ or the complement
            $X^c\in\mathscr U$.
    \end{itemize}
    An ultrafilter is \emph{free} if it contains no singleton $\{n\}$ for any
    $n\in\mathbb N$.
\end{definition}

For example, for each $n\in\mathbb N$, the set-of-subsets
\[
\{X\subseteq\mathbb N\,:\, n\in X\}
\]
is an ultrafilter, but not a free ultrafilter.
It is not difficult to prove that every non-free ultrafilter has the above form.

The following theorem is well-known, and we state it here without proof.
All the proofs of this theorem are non-constructive: in a sense which can be
made formal, it is impossible to actually exhibit a concrete example of a
free ultrafilter.

\begin{theorem}
\label{ultrafilterexistence}
    There exists a free ultrafilter.
\end{theorem}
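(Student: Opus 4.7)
The plan is to apply Zorn's Lemma to the poset of filters on $\mathbb{N}$ that extend the Fr\'echet filter of cofinite sets. First I would introduce a \emph{filter} as a collection $\mathscr{F}$ of subsets of $\mathbb{N}$ satisfying the Properness, Upward Closure, and $\cap$-closure clauses of Definition \ref{ultrafilterdefn} (but not necessarily Maximality), and verify that the Fr\'echet collection $\mathscr{F}_{\mathrm{co}}=\{X\subseteq\mathbb{N}\,:\,\mathbb{N}\setminus X\text{ is finite}\}$ is a filter; Properness here uses that $\mathbb{N}$ is infinite, and $\cap$-closure uses that the union of two finite sets is finite.

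Next, I would order the collection of all filters extending $\mathscr{F}_{\mathrm{co}}$ by inclusion. Every chain of such filters has an upper bound, namely its union: Properness and Upward Closure transfer to unions of chains immediately, and $\cap$-closure follows from the fact that any two elements of the union lie together in a single filter of the chain. Zorn's Lemma therefore produces a filter $\mathscr{U}\supseteq\mathscr{F}_{\mathrm{co}}$ that is maximal under inclusion among filters on $\mathbb{N}$.

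The main step, and the only step that needs real argument, is to promote set-theoretic maximality of $\mathscr{U}$ to the Maximality clause of Definition \ref{ultrafilterdefn}. The key trick: given $X\subseteq\mathbb{N}$ with $X\notin\mathscr{U}$, form
\[
\mathscr{U}'=\{Y\subseteq\mathbb{N}\,:\,Y\supseteq X^c\cap Z\text{ for some }Z\in\mathscr{U}\}.
\]
I would check that $\mathscr{U}'$ is a filter containing $\mathscr{U}\cup\{X^c\}$; the only delicate point is Properness, because $\emptyset\in\mathscr{U}'$ would mean some $Z\in\mathscr{U}$ satisfies $Z\subseteq X$, whence Upward Closure of $\mathscr{U}$ would force $X\in\mathscr{U}$, contradicting the hypothesis. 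Thus $\mathscr{U}'$ is a proper filter extending $\mathscr{U}$, so by the maximality given by Zorn's Lemma $\mathscr{U}'=\mathscr{U}$, and in particular $X^c\in\mathscr{U}$.

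Finally, freeness is immediate: since $\mathscr{U}\supseteq\mathscr{F}_{\mathrm{co}}$, each cofinite set $\mathbb{N}\setminus\{n\}$ lies in $\mathscr{U}$, so if $\{n\}\in\mathscr{U}$ were also the case then $\cap$-closure would place $\emptyset$ in $\mathscr{U}$, contradicting Properness. The conceptual obstacle is the promotion step above; everything else is routine verification. I should also note that the proof is essentially non-constructive: Zorn's Lemma (and hence the Axiom of Choice) is what drives the existence, in line with the remark in the paper that no free ultrafilter can be exhibited explicitly.
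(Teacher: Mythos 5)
Your proof is correct. Note that the paper itself deliberately offers no proof of this theorem---it is stated as well-known, with only the remark that every proof is non-constructive---so there is no in-paper argument to compare against; what you have written is the canonical Zorn's-Lemma derivation of the ultrafilter lemma, specialized to extend the Fr\'echet filter so that freeness comes for free. All the essential points are present and handled properly: the verification that unions of chains are filters, the ``promotion'' step showing that a filter maximal under inclusion satisfies the Maximality clause (your $\mathscr{U}'$ construction and the Properness check via $Z\subseteq X\Rightarrow X\in\mathscr{U}$ is exactly right), and the observation that containing all cofinite sets rules out containing any singleton. The only point you might make explicit is that the poset of filters extending $\mathscr{F}_{\mathrm{co}}$ is nonempty (witnessed by $\mathscr{F}_{\mathrm{co}}$ itself), which is what licenses the application of Zorn's Lemma; this is routine. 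Your closing remark correctly ties the reliance on the Axiom of Choice to the paper's claim that no free ultrafilter can be concretely exhibited.
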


There are two competing intuitions one can use to reason about an ultrafilter,
and these intuitions almost seem to contradict each other.
\begin{itemize}
\item
The first intuition is
that ``$X\in\mathscr U$'' can be read as ``$X$ contains almost every natural number''. Through this intuitive lens, the $\cap$-closure property seems
obvious, whereas the Maximality property seems implausible.
\item
The second intuition is that ``$X\in\mathscr U$'' can be read as
``$X$ is a winning bloc of natural numbers'' (in an electoral sense).
Through this lens, the Maximality property seems obvious (when two candidates
compete in an election, one of them must win), whereas the $\cap$-closure property
seems implausible.
\end{itemize}
Theorem \ref{ultrafilterexistence} is
profound because it says these two
intuitions can be reconciled in a non-degenerate way.

\section{An abstract intelligence comparator}
\label{maindefinitionsection}

We will arrive at an abstract method of comparing DLHAs' intelligence by means of
an imaginary election. The method we arrive at will not itself directly involve elections:
elections are merely the heuristic used to obtain our definition, and do not directly
feature in our definition.
So in motivating the definition, we will intentionally
speak about elections in less than full formality. For full formality, see \citet{kirman}.

We would like to answer the question,
``who is more intelligent, $A$ or $B$?'', by letting different
deterministic environments vote. In the election, there are three
candidates: ``$A$ is more intelligent'',
``$B$ is more intelligent'', and ``$A$ and $B$ are equally intelligent''.
A deterministic environment $e$ is considered to rank these three candidates,
from most to least
preferable, as follows:
\begin{enumerate}
    \item
    If $A$ earns more reward than $B$ when run on $e$,
    then $e$ ranks ``$A$ is more intelligent'' most preferable,
    followed by ``$A$ and $B$ are equally intelligent'', followed
    by ``$B$ is more intelligent''.
    \item
    If $B$ earns more reward than $A$ when run on $e$,
    then $e$ ranks ``$B$ is more intelligent'' most preferable,
    followed by ``$A$ and $B$ are equally intelligent'', followed
    by ``$A$ is more intelligent''.
    \item
    If $A$ and $B$ earn the same reward when run on $e$,
    then $e$ ranks ``$A$ and $B$ are equally intelligent'' most preferable,
    followed by ``$A$ is more intelligent'',
    followed by ``$B$ is more intelligent''.
\end{enumerate}
The order of the last two preferences in the latter case is arbitrary.
It would be more natural to simply let $e$ vote on a single winner,
but we make $e$ rank the three candidates because that is the form
of election which
Kirman and Sondermann
considered.
It remains now to specify how to use the votes to determine a winner.

A method of turning voter preferences into a group-preference is
called a \emph{social welfare function}.
A social welfare function is a \emph{dictatorship} if there is a particular voter
(called a \emph{dictator}) whose preference
always equals the group-preference given by the social welfare function.
Using a dictatorship to settle the election in
the above paragraph would amount to defining relative intelligence entirely by performance
in one specific fixed deterministic environment,
which is undesirable (by Lemma \ref{inadequacylemma}).

Two other desirable properties of a social welfare function are:
\begin{itemize}
    \item
    (Unanimity)
    If every voter agrees to prefer candidate $A$ over candidate $B$, then
    the group-preference prefers candidate $A$ over candidate $B$.
    \item
    (Independence)
    Given any two voter-preference-sets and any
    two candidates, if the relative voter-preferences
    between the two candidates are the same in the two voter-preference-sets, then the
    corresponding group-preferences between the two candidates are the same.
\end{itemize}
Arrow's impossibility theorem states that if there are finitely many voters, then
no social welfare function can satisfy (Unanimity) and (Independence) without being
a dictatorship. See \citet{encyclopedia}
to get a better feel for what is at stake here.
Fortunately, non-dictatorial solutions are possible when there are infinitely many
voters.
The following theorem is an immediate corollary of \citet{kirman}:

\begin{theorem}
\label{shoulderofgiants}
    For any countably infinite
    sequence $\vec{e}=(e_0,e_1,\ldots)$ of deterministic environments, suppose the
    members of $\vec{e}$
    shall rank the following three candidates: ``$A$ is more intelligent'',
    ``$B$ is more intelligent'', and ``$A$ and $B$ are equally intelligent''.
    Let $\Sigma$ be the corresponding set of social welfare functions satisfying
    (Unanimity) and (Independence).
    \begin{enumerate}
        \item
        For every social welfare function $\sigma\in\Sigma$, there is exactly one
        ultrafilter $\mathscr U_\sigma$ with the following property:
        Any way $e_0,e_1,\ldots$ rank the three candidates, for every two candidates
        $x$ and $y$,
        if
        \[
            \{n\,:\,\mbox{$e_n$ prefers $x$ over $y$}\} \in \mathscr U_\sigma,
        \]
        then $\sigma$ says that $x$ is preferred
        over $y$ in the corresponding group-preference.
        \item
        The map $\sigma\mapsto \mathscr U_\sigma$ is a surjection onto the set of ultrafilters
        (in other words, every ultrafilter $\mathscr U$ is equal to $\mathscr U_\sigma$ for
        some social welfare function $\sigma\in\Sigma$).
        \item
        A social welfare function $\sigma\in\Sigma$ is a dictatorship if and only if
        $\mathscr U_\sigma$ is not free.
    \end{enumerate}
\end{theorem}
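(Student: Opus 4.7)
The plan is to recover all three items by directly porting the Kirman--Sondermann framework into this setting; since the environments play the role of voters and the three ``more intelligent'' statements play the role of abstract candidates, the theorem really is an immediate corollary and the work is verification rather than invention.

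For item (1), given $\sigma\in\Sigma$, I would define
\[
\mathscr U_\sigma = \{S\subseteq\mathbb N : S \text{ is \emph{decisive} for } \sigma\},
\]
where $S$ is called decisive if, for some (equivalently any, by Independence) pair of distinct candidates $x,y$, in every profile where the voters in $S$ strictly prefer $x$ over $y$ the group preference produced by $\sigma$ also ranks $x$ over $y$. I then verify the four axioms of Definition \ref{ultrafilterdefn}. Properness is immediate from Unanimity applied to the reverse profile; Upward Closure is built into the decisiveness definition; and Maximality follows by taking a profile in which voters in $S$ prefer $x$ over $y$ and voters in $S^c$ prefer $y$ over $x$, and observing that $\sigma$ must decide one way, which by Independence determines whether $S$ or $S^c$ is decisive. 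The nontrivial step, as usual in Arrow-style arguments, is $\cap$-closure: given decisive $S,T$, one constructs a profile in which voters in $S\cap T$ rank $x$ over $y$ over $z$, voters in $S\setminus T$ rank $z$ over $x$ over $y$, voters in $T\setminus S$ rank $y$ over $z$ over $x$, and the remaining voters are arranged suitably, then chases the induced group preferences between the three pairs through Independence to conclude that $S\cap T$ is decisive. Uniqueness of $\mathscr U_\sigma$ is automatic, since by Maximality the decisiveness condition both necessitates and suffices for membership in the ultrafilter.

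For item (2), given an ultrafilter $\mathscr U$, I would define $\sigma_{\mathscr U}$ by declaring that the group ranks $x$ over $y$ iff $\{n : e_n \text{ prefers } x \text{ over } y\}\in\mathscr U$. Unanimity and Independence are built into the definition, so the crucial check is that the resulting group preference is a legitimate (transitive) ranking. If the group ranks $x$ over $y$ and $y$ over $z$, then the sets $\{n : e_n \text{ prefers } x \text{ over } y\}$ and $\{n : e_n \text{ prefers } y \text{ over } z\}$ both lie in $\mathscr U$; their intersection, which lies in $\mathscr U$ by $\cap$-closure, consists of voters whose individual rankings are transitive and hence prefer $x$ over $z$, so by Upward Closure the set $\{n : e_n \text{ prefers } x \text{ over } z\}$ is in $\mathscr U$, forcing the group to rank $x$ over $z$. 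Thus $\sigma_{\mathscr U}\in\Sigma$, and one checks $\mathscr U_{\sigma_{\mathscr U}}=\mathscr U$ directly from the definitions, proving surjectivity.

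For item (3), $n$ is a dictator for $\sigma$ precisely when $\{n\}$ is decisive, that is, when $\{n\}\in\mathscr U_\sigma$; by Upward Closure and Maximality this is equivalent to $\mathscr U_\sigma$ being the principal (hence non-free) ultrafilter at $n$. The main obstacle in the entire argument is the $\cap$-closure step of item (1) -- the classical ``contraction of decisive coalitions'' lemma; the other verifications are straightforward unpackings of the definitions, and once (1) is in hand, parts (2) and (3) follow by direct computation.
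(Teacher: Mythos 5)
Your proposal is correct in outline, but it takes a different route from the paper only in the sense that the paper does not prove this theorem at all: its entire proof is the citation ``By Theorem 1 and Proposition 2 of \citet{kirman}.'' What you have written is, in substance, a reconstruction of the decisive-coalition argument that Kirman and Sondermann themselves give (itself the infinite-voter generalization of the standard Arrow proof), so you are re-deriving the cited source rather than genuinely diverging from it. The reconstruction is sound, and the extra value it provides is making visible \emph{why} three candidates suffice and where each ultrafilter axiom comes from. Two places in your sketch lean on classical lemmas without naming them, and you should be aware that they cost more than you charge for them: (i) the parenthetical ``equivalently any, by Independence'' in your definition of decisiveness is the field-expansion (spread-of-decisiveness) lemma, which requires Unanimity together with the presence of a third candidate, not Independence alone; and (ii) your $\cap$-closure chase, after concluding that the group prefers $x$ over $z$ in the constructed profile, only establishes that $S\cap T$ is decisive \emph{against the field} (i.e., in profiles where everyone outside $S\cap T$ prefers $z$ over $x$), and upgrading this to full decisiveness requires another application of field expansion. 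Both steps are standard and go through here because the theorem quantifies over arbitrary rankings of the three candidates, so you are free to construct the needed profiles. Items (2) and (3) are handled correctly: completeness of the group ranking in (2) follows from Maximality because each voter's ranking is a strict linear order, and the dictator--principal-ultrafilter correspondence in (3) is exactly right.
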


\begin{proof}
    By Theorem 1 and Proposition 2 of \citet{kirman}.
\end{proof}

In particular, since Theorem \ref{ultrafilterexistence} says there exists a free
ultrafilter, Theorem \ref{shoulderofgiants} implies there is a non-dictatorship
social welfare function satisfying (Unanimity) and (Independence).

Motivated by Theorem \ref{shoulderofgiants}, we will define a family of intelligence
comparators.

\begin{definition}
    By an \emph{electorate}, we mean a
    pair $E=(\vec{e}, \mathscr U)$ where
    $\vec{e}=(e_0,e_1,\ldots)$ is any countably infinite sequence of deterministic environments
    and $\mathscr U$ is an ultrafilter.
\end{definition}

\begin{definition}
\label{MainDefinition}
    For every electorate $E=(\vec{e},\mathscr U)$,
    we define relations $>_{E}$, $<_{E}$ and $=_{E}$ on DLHAs as follows.
    Let $A$ and $B$ be any two DLHAs.
    \begin{enumerate}
        \item
        If
        \[
            \{n\,:\,\mbox{$A$ outperforms $B$ on environment $e_n$}\}\in\mathscr U,
        \]
        we declare $A>_{E}B$
        (and say $A$ is \emph{more intelligent than $B$ according to $E$}).
        \item
        If
        \[
            \{n\,:\,\mbox{$B$ outperforms $A$ on environment $e_n$}\}\in\mathscr U,
        \]
        we declare $A<_{E}B$
        (and say $A$ is \emph{less intelligent than $B$ according to $E$}).
        \item
        If
        \[
            \{n\,:\,\mbox{$A$ earns the same reward as $B$ on $e_n$}\}\in\mathscr U,
        \]
        we declare $A=_{E}B$
        (and say $A$ and $B$ are \emph{equally intelligent according to $E$}).
    \end{enumerate}
\end{definition}

If $\mathscr U$ is non-free, then Definition \ref{MainDefinition} is trivial:
it prescribes that we compare intelligence of DLHAs by comparing their
performance in one fixed deterministic environment (because the corresponding social welfare
function is a dictatorship). We are mainly interested in the case where $\mathscr U$
is free\footnote{In particular, the strategy of this paper would break
down if the universe of deterministic environments were restricted to a finite set.}.

Note that Definition \ref{MainDefinition} already compares all possible DLHAs.
Thus we avoid a common intelligence comparison problem, wherein one compares some
limited set of agents, but then the addition of another agent changes the
intelligence order of the original agents.

An important difference between our approach and that of \citet{hutter2007}
is that when comparing the relative intelligence of two agents, we only concern ourselves
with the sets of environments where each agent outperforms the other, and we do not
concern ourselves with the numerical difference of the two agents' performance in any
environment. Thus, to us, if DLHA $A$ gains $10000$ more reward than DLHA $B$ on some
deterministic environment, that only helps $A$ the same amount as
if $A$ gained $0.00001$ more reward
than $B$ on that environment.
This is similar to how Elo ratings of chass-players
are based only on final outcomes (win, loss, draw),
ignoring any notion of the magnitude of a win or loss.
We would defend this choice by pointing out that for any
particular deterministic environment $e$, there is an
equivalent deterministic environment $e'$ which is identical to
$e$ in every way except that its rewards are all multiplied by $10000$, and there is also
an equivalent deterministic environment $e''$ which is
identical to $e$ in every way except that its
rewards are all multiplied by $0.00001$.

Note that we could have stated Definition \ref{MainDefinition} without the
precise structure of Definition \ref{EnvironmentAndAgent}.
All that is needed to state Definition \ref{MainDefinition} is that agents
outperform each other on environments, regardless of what exactly that actually means.
However, the precise structure of DLHAs and deterministic environments becomes
important when one
wants to actually obtain nontrivial \emph{structural properties} about the resulting
comparators (Section \ref{structuralpropertiessection}).

\begin{lemma}
\label{firsteasylemma}
    For every electorate $E=(\vec{e},\mathscr U)$ and DLHAs $A$ and $B$, exactly one of the following is true:
    \begin{enumerate}
        \item $A>_{E}B$.
        \item $A<_{E}B$.
        \item $A=_{E}B$.
    \end{enumerate}
\end{lemma}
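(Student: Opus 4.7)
The plan is to partition $\mathbb{N}$ into three disjoint pieces according to the trichotomy of how $A$ and $B$ compare on each $e_n$, and then use the ultrafilter axioms (Properness, Maximality, $\cap$-closure) to show that exactly one of the three pieces lies in $\mathscr{U}$.

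Concretely, I would introduce the sets
\[
X = \{n : A \text{ outperforms } B \text{ on } e_n\}, \quad Y = \{n : B \text{ outperforms } A \text{ on } e_n\}, \quad Z = \{n : A \text{ and } B \text{ perform equally well on } e_n\}.
\]
The first observation to record is that $X, Y, Z$ are pairwise disjoint and $X \cup Y \cup Z = \mathbb{N}$. This uses the definition of a deterministic (not merely pre-) environment: for each $n$, both $A$ and $B$ achieve a convergent, hence well-defined real-valued, total reward on $e_n$, and exactly one of ``$>$'', ``$<$'', ``$=$'' holds between those two real numbers.

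For existence of at least one case, I would argue by cases on Maximality. If $X \in \mathscr{U}$, we are in case~1. Otherwise $X^c \in \mathscr{U}$; consider $Y$. If $Y \in \mathscr{U}$, we are in case~2. Otherwise $Y^c \in \mathscr{U}$, and by $\cap$-closure $X^c \cap Y^c \in \mathscr{U}$. But from the partition, $X^c \cap Y^c = Z$, so $Z \in \mathscr{U}$ and we are in case~3. For uniqueness, suppose two of $X, Y, Z$ lay in $\mathscr{U}$; then by $\cap$-closure their intersection, which is $\emptyset$ by disjointness, would belong to $\mathscr{U}$, contradicting Properness.

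The main (and only) delicate step is the appeal to convergence of reward sequences to ensure the three sets genuinely partition $\mathbb{N}$; the rest is an almost mechanical use of the ultrafilter axioms. I expect the whole argument to take only a few lines once the partition $X, Y, Z$ is set up.
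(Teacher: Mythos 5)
Your proposal is correct and follows essentially the same route as the paper's proof: the same three sets, the same case analysis via Maximality and $\cap$-closure for existence, and the same disjointness-plus-Properness argument for uniqueness. Your explicit remark that convergence of the total rewards is what makes $X, Y, Z$ a genuine partition of $\mathbb{N}$ is a small point the paper leaves implicit, but otherwise the arguments coincide.
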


\begin{proof}
    Let
    \begin{eqnarray*}
        X_1 &=& \{n\,:\,\mbox{$A$ earns more reward than $B$ on $e_n$}\},\\
        X_2 &=& \{n\,:\,\mbox{$A$ earns less reward than $B$ on $e_n$}\},\\
        X_3 &=& \{n\,:\,\mbox{$A$ earns the same reward as $B$ on $e_n$}\}.
    \end{eqnarray*}
    By Maximality (from Definition \ref{ultrafilterdefn}),
    either $X_1\in \mathscr U$ or $X_1^c\in\mathscr U$.

    Case 1: $X_1\in\mathscr U$. Then $X_2\not\in\mathscr U$ lest $X_1\cap X_2=\emptyset$ be in
    $\mathscr U$ (by $\cap$-closure), which would violate Properness of $\mathscr U$.
    Similarly, $X_3\not\in\mathscr U$. Altogether, $A>_{E}B$,
    $A\not<_{E}B$, and $A\neq_{E}B$.

    Case 2: $X_1\not\in\mathscr U$.
    By Maximality, either $X_2\in\mathscr U$ or $X_2^c\in\mathscr U$.

    Subcase 1: $X_2\in\mathscr U$. Then $X_3\not\in\mathscr U$, lest $X_2\cap X_3=\emptyset$
    be in $\mathscr U$, violating Properness. Altogether, $A<_{E}B$,
    $A\not>_{E}B$, and $A\neq_{E}B$.

    Subcase 2: $X_2^c\in\mathscr U$. By Maximality, either $X_3\in\mathscr U$ or
    $X_3^c\in\mathscr U$. We cannot have $X_3^c\in\mathscr U$, or else
    $X_1^c\cap X_2^c\cap X_3^c=\emptyset$
    would be in $\mathscr U$, violating Properness. So $X_3\in\mathscr U$.
    Altogether, $A=_{E}B$, $A\not>_{E}B$, and $A\not<_{E}B$.
\end{proof}

\begin{lemma}
    Let $E=(\vec{e},\mathscr U)$ be an electorate and let $A$ and $B$ be DLHAs.
    \begin{enumerate}
        \item The following are equivalent: $A=_{E}B$, $B=_{E}A$.
        \item The following are equivalent: $A>_{E}B$, $B<_{E}A$.
    \end{enumerate}
\end{lemma}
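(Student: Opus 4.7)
The plan is to note that both equivalences are essentially immediate from unfolding Definition \ref{MainDefinition}, combined with the observation that the underlying relations on environments are either symmetric or literally identical.

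For part (1), I would unfold $A =_E B$ as $\{n : A \text{ and } B \text{ earn the same reward on } e_n\} \in \mathscr{U}$. Earning the same total reward is a symmetric relation on DLHAs (it reduces to equality of two real numbers), so the set displayed above is equal, element-for-element, to $\{n : B \text{ and } A \text{ earn the same reward on } e_n\}$, which is the condition that unfolds $B =_E A$. Since the two defining sets are identical, membership in $\mathscr{U}$ is equivalent in both directions.

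For part (2), I would similarly unfold the two sides. By Definition \ref{MainDefinition}(1), $A >_E B$ means $\{n : A \text{ outperforms } B \text{ on } e_n\} \in \mathscr{U}$. By Definition \ref{MainDefinition}(2), applied with the roles of the two DLHAs swapped, $B <_E A$ means $\{n : A \text{ outperforms } B \text{ on } e_n\} \in \mathscr{U}$. These are syntactically the same membership assertion about the same subset of $\mathbb{N}$, so the equivalence is immediate.

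There is no real obstacle here: no use of Properness, Maximality, or $\cap$-closure is required, because we are not combining or complementing sets within $\mathscr{U}$, only observing that the sets that the two sides assert to lie in $\mathscr{U}$ coincide. I expect the entire write-up to be only a few lines, consisting of little more than displaying the unfolded membership conditions side by side.
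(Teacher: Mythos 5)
Your proposal is correct and is exactly the argument the paper intends by its one-word proof ``Straightforward'': both equivalences follow by unfolding Definition \ref{MainDefinition} and observing that the two sides assert membership in $\mathscr U$ of literally the same subset of $\mathbb N$. No properties of the ultrafilter are needed, just as you say.
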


\begin{proof}
Straightforward.
\end{proof}

\begin{lemma}
    For every electorate $E=(\vec{e},\mathscr U)$ and DLHA $A$,
    $A=_{E}A$, $A\not>_{E}A$, and $A\not<_{E}A$.
\end{lemma}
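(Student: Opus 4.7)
The plan is to unwind the three claims directly against Definition \ref{MainDefinition}, using only Properness and Maximality from the ultrafilter axioms. The key observation is that when we ``play $A$ against itself,'' every environment produces the trivial comparison: $A$ ties with $A$ on $e_n$ for every $n$.

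First I would compute the three relevant sets from Definition \ref{MainDefinition} with $B$ replaced by $A$. Since $A$ plays identically to $A$ on each $e_n$, the total reward achieved by $A$ on $e_n$ equals the total reward achieved by $A$ on $e_n$. Consequently,
\[
\{n : A \text{ outperforms } A \text{ on } e_n\} = \emptyset,
\]
\[
\{n : A \text{ underperforms } A \text{ on } e_n\} = \emptyset,
\]
and
\[
\{n : A \text{ earns the same reward as } A \text{ on } e_n\} = \mathbb{N}.
\]

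Next I would verify the ultrafilter membership of these sets. By Properness, $\emptyset \notin \mathscr{U}$, which immediately gives $A \not>_E A$ and $A \not<_E A$. For the remaining claim, I need $\mathbb{N} \in \mathscr{U}$: apply Maximality to $X = \emptyset$, giving $\emptyset \in \mathscr{U}$ or $\emptyset^c = \mathbb{N} \in \mathscr{U}$; the first option is ruled out by Properness, so $\mathbb{N} \in \mathscr{U}$, yielding $A =_E A$.

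There is no real obstacle here; the statement is essentially a sanity check that the axioms of Definition \ref{ultrafilterdefn} are strong enough to guarantee reflexivity and irreflexivity of the derived relations. One stylistic alternative would be to invoke Lemma \ref{firsteasylemma}, noting that exactly one of the three cases must hold and that we have already exhibited $A =_E A$, hence the other two fail; I would mention this as a one-line alternative but prefer the direct argument since it avoids a detour and makes clear which ultrafilter axioms are being used.
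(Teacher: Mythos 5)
Your proof is correct and follows exactly the route the paper indicates: the paper's proof is simply ``Straightforward (use the facts that $\emptyset\not\in\mathscr U$ by Properness, and $\mathbb N\in\mathscr U$ by Properness plus Maximality),'' which is precisely the argument you spell out. Nothing is missing.
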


\begin{proof}
    Straightforward (use the facts that $\emptyset\not\in\mathscr U$ by Properness,
    and $\mathbb N\in\mathscr U$ by Properness plus Maximality).
\end{proof}

When one defines a numerical intelligence measure for a single agent (as Legg and Hutter),
the corresponding comparison of two agents is automatically transitive.
If one instead chooses to directly compare two agents rather than measure a single agent,
one runs the risk of losing transitivity. The following lemma shows that we have
avoided that danger.

\begin{lemma}
\label{transitivityLemma}
    (Transitivity)
    Let $E=(\vec{e},\mathscr U)$ be an electorate and let $A,B,C$ be DLHAs.
    \begin{enumerate}
        \item If $A>_{E}B$ and $B>_{E}C$, then $A>_{E}C$.
        \item If $A<_{E}B$ and $B<_{E}C$, then $A<_{E}C$.
        \item If $A=_{E}B$ and $B=_{E}C$, then $A=_{E}C$.
    \end{enumerate}
\end{lemma}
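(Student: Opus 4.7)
The plan is to prove all three parts uniformly by exploiting the fact that $<$, $>$, and $=$ on real numbers are themselves transitive, then transporting this transitivity through the ultrafilter $\mathscr U$ using $\cap$-closure and Upward Closure.

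For part (1), I would first unpack the hypotheses: $A>_E B$ means the set $X = \{n : A \text{ outperforms } B \text{ on } e_n\}$ lies in $\mathscr U$, and $B>_E C$ means $Y = \{n : B \text{ outperforms } C \text{ on } e_n\}$ lies in $\mathscr U$. Apply $\cap$-closure to conclude $X \cap Y \in \mathscr U$. Next I would observe that for any $n \in X \cap Y$, $A$'s total reward on $e_n$ strictly exceeds $B$'s, which strictly exceeds $C$'s; since $<$ on $\mathbb{R}$ is transitive, $A$'s total reward on $e_n$ strictly exceeds $C$'s. Hence $X \cap Y \subseteq Z$, where $Z = \{n : A \text{ outperforms } C \text{ on } e_n\}$. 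Finally, by Upward Closure, $Z \in \mathscr U$, so $A>_E C$.

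Parts (2) and (3) follow by the identical pattern, with the only change being which pointwise real-number transitivity is invoked: $<$ for part (2), $=$ for part (3). In each case one intersects the two witness-sets supplied by the hypotheses, uses transitivity of the corresponding relation on $\mathbb R$ to show the intersection is contained in the desired witness-set for the conclusion, and invokes Upward Closure.

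The main obstacle is essentially nonexistent; this is a routine bookkeeping argument in which the only nontrivial inputs are the two ultrafilter axioms ($\cap$-closure and Upward Closure) plus the observation that strict and non-strict comparisons of real numbers transit in the obvious ways. No use of Properness or Maximality is required, which also means transitivity is really a feature of \emph{filters} (not specifically ultrafilters); this is worth mentioning briefly but the proof itself should be stated in three short parallel paragraphs, one per clause.
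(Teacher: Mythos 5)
Your proof is correct and follows essentially the same route as the paper's: intersect the two witness sets (which lies in $\mathscr U$ by $\cap$-closure), use pointwise transitivity of the real-number comparison to show the intersection is contained in the witness set for the conclusion, and finish with Upward Closure. The paper proves only the $>_E$ clause explicitly and notes the others are similar, exactly as you do; your remark that only the filter axioms are used is accurate and a nice observation.
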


\begin{proof}
    We prove the $>_{E}$ claim, the others are similar.
    Let
    \begin{eqnarray*}
        X_{AB} &=& \{n\,:\,\mbox{$A$ earns more reward than $B$ on $e_n$}\},\\
        X_{BC} &=& \{n\,:\,\mbox{$B$ earns more reward than $C$ on $e_n$}\},\\
        X_{AC} &=& \{n\,:\,\mbox{$A$ earns more reward than $C$ on $e_n$}\}.
    \end{eqnarray*}
    Since $A>_{E}B$, $X_{AB}\in\mathscr U$.
    Since $B>_{E}C$, $X_{BC}\in\mathscr U$.
    Now, for any $n$, if $A$ earns more reward than $B$ on $e_n$,
    and $B$ earns more reward than $C$ on $e_n$,
    then $A$ earns more reward than $C$ on $e_n$.
    This shows $X_{AC}\supseteq X_{AB}\cap X_{BC}$.
    By $\cap$-closure, $X_{AB}\cap X_{BC}\in\mathscr U$,
    so by Upward Closure, $X_{AC}\in\mathscr U$, that is, $A>_{E}C$.
\end{proof}

    \subsection{Comparing specialized intelligence}

    Definition \ref{MainDefinition} is flexible in that it
    allows one to choose which deterministic environments to consider.
    By considering specialized environments, one could obtain
    comparisons of specialized intelligence. For example,
    suppose $P_0,P_1,\ldots$ is a list of
    chess-playing computer programs. For each $i$, let $e_{2i}$
    (resp.\ $e_{2i+1}$)
    be the environment: ``play the white (resp.\ black) pieces
    in a game of chess against $P_i$, receiving $+1$ reward
    on the move (if any) in which you win, $-1$ reward
    on the move (if any) in which you lose, and $0$
    reward on all other moves'' (we can extend chess into an
    infinite game by obliging the players to play
    dummy moves forever after usual gameplay ends).
    For each ultrafilter, Definition \ref{MainDefinition} yields
    an abstract comparison of performance at the game of
    chess.

    A couple nuances about this paper will be illuminated by contrasting
    the above-mentioned comparison with practical chess-rating systems
    such as Elo ratings. Rating systems such as Elo
    cast chess-players as agents who can improve (or get worse)
    from game to game. By contrast, the DLHAs of
    Definition \ref{EnvironmentAndAgent} are fixed, and do not
    change from one environment to the next\footnote{This is reminiscent of
    an observation of \citet{good}: in an ideal chess-rating system,
    assuming players were accurately rated, the expected change to the ratings
    after any particular chess-gameplay ought to be $0$.}. This illustrates that
    Definition \ref{MainDefinition} is not really appropriate for
    the intelligence of human beings; it is more appropriate
    for the intelligence of a general-purpose utility-maximizing AI intended
    to be used in many different well-defined, rational, emotionless tasks.

    A primary reason we care about Definition \ref{MainDefinition} is
    because it is simple enough to allow the discovery
    of nontrivial, non-contrived structural properties.
    Something like Elo rating is infinitely more practical, but
    we doubt that it is possible to prove any nontrivial,
    non-contrived structural properties about performance as
    measured by Elo. Indeed, it is partly because
    our abstract definition treats agents as fixed and unchanging and deterministic
    that structural properties become obtainable. A chess-player's
    Elo rating at any given time depends not only on the outcomes
    that player has achieved, but also on the order of those outcomes,
    and on the orders of the outcomes achieved by the opponents
    at each play.

\section{Structural properties}
\label{structuralpropertiessection}

In this section, we will exhibit some nontrivial structural properties of
our intelligence comparators.
The properties we have been able to come up with are humble and few,
but we believe that in the context of Legg-Hutter-style intelligence measurement, they are
first of their kind.

\subsection{Properties of Teams}

The following definition is a special case of a more general definition which will
follow shortly.

\begin{definition}
\label{flexiblespecialcase}
For any DLHAs $A$ and $B$, we define a new
DLHA $A\oplus B$ such that for every observation-reward sequence
$(r_1,o_1,\ldots,r_n,o_n)$,
\[
(A\oplus B)(r_1,o_1,\ldots,r_n,o_n) =
    \begin{cases}
        A(r_1,o_1,\ldots,r_n,o_n) & \mbox{if $o_1$ is even,}\\
        B(r_1,o_1,\ldots,r_n,o_n) & \mbox{if $o_1$ is odd.}
    \end{cases}
\]
\end{definition}

One can think of $A\oplus B$ as a DLHA who plans to act as $A$ or $B$,
but has not yet decided which one. This DLHA will wait until seeing the first
observation in the environment before committing to act as $A$ or committing
to act as $B$. In an intuitive sense, $A\oplus B$ is a type of ``team'' formed
by $A$ and $B$.

\begin{proposition}
\label{EvenOddComboProposition}
Let $E=(\vec{e},\mathscr U)$ be an electorate
and let $A,A',B,B'$ be DLHAs.
If $A>_{E}A'$ and $B>_{E}B'$,
then $A\oplus B >_{E} A'\oplus B'$.
\end{proposition}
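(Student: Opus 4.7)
The plan is to exploit the fact that the initial observation $o_1$ of any environment $e_n$ depends only on $e_n$ (it is the second component of $e_n(\langle\rangle)$) and is completely independent of which agent is playing. This lets us split $\mathbb{N}$ cleanly by parity of $o_1$, and on each side the combined agent reduces to one of its constituents.

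First I would introduce the three relevant subsets of $\mathbb{N}$:
\[
X_{AA'} = \{n : A \text{ outperforms } A' \text{ on } e_n\}, \qquad
X_{BB'} = \{n : B \text{ outperforms } B' \text{ on } e_n\},
\]
\[
P = \{n : \text{the initial observation produced by } e_n \text{ is even}\}.
\]
By hypothesis $X_{AA'},X_{BB'}\in\mathscr U$, and by Maximality either $P\in\mathscr U$ or $P^{c}\in\mathscr U$.

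Next I would verify the key agent-level identity: if $n\in P$, then when $A\oplus B$ plays in $e_n$, the choice made at step $1$ locks in the ``$A$-branch'' of Definition~\ref{flexiblespecialcase} for every subsequent step, because $o_1$ appears unchanged in every input to $A\oplus B$ thereafter. A straightforward induction on the step index shows that the reward-observation-action sequence of $A\oplus B$ in $e_n$ coincides with that of $A$ in $e_n$, so the two earn identical total reward; similarly $A'\oplus B'$ matches $A'$ on $e_n$. Consequently $P\cap X_{AA'}$ is contained in $Y:=\{n : A\oplus B \text{ outperforms } A'\oplus B' \text{ on } e_n\}$. Symmetrically, $P^{c}\cap X_{BB'}\subseteq Y$.

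To finish, I would use ultrafilter closure: in the case $P\in\mathscr U$, $\cap$-closure gives $P\cap X_{AA'}\in\mathscr U$, and then Upward Closure gives $Y\in\mathscr U$; in the case $P^{c}\in\mathscr U$, the analogous argument with $P^{c}\cap X_{BB'}$ gives the same conclusion. Either way $A\oplus B>_{E}A'\oplus B'$. The only real subtlety — and essentially the only step that uses the structure of the DLHA model rather than just ultrafilter bookkeeping — is the verification that $o_1$ is agent-independent and therefore that $A\oplus B$ truly behaves as $A$ (or as $B$) throughout a whole run; the rest is a routine application of Properness, Maximality, $\cap$-closure, and Upward Closure, mirroring the pattern already used in Lemma~\ref{transitivityLemma}.
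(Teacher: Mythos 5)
Your proposal is correct and follows essentially the same route as the paper's own proof: split $\mathbb N$ by the parity of the (agent-independent) initial observation, use Maximality to put one side of the split in $\mathscr U$, intersect with the hypothesis set for the corresponding constituent agent, and conclude by Upward Closure. The only difference is cosmetic --- you spell out the induction showing that $A\oplus B$'s entire run coincides with $A$'s, which the paper asserts without detail, and you treat both cases symmetrically where the paper handles one and says the other is similar.
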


\begin{proof}
Let $V$ be the set of $n$ such that $e_n$ has first observation even, and let
$D$ be the set of $n$ such that $e_n$ has first observation odd.
By Maximality, either $V\in\mathscr U$, or $V^c=D\in\mathscr U$. We will assume
$V\in\mathscr U$, the other case is similar.

Let
\begin{eqnarray*}
    X_1 &=& \{n\,:\,\mbox{$A$ outperforms $A'$ on $e_n$}\},\\
    X_2 &=& \{n\,:\,\mbox{$A\oplus B$ outperforms $A'\oplus B'$ on $e_n$}\}.
\end{eqnarray*}
Since $A>_{E}A'$, $X_1\in\mathscr U$. By $\cap$-closure, $X_1\cap V\in\mathscr U$.
Now, for every $n\in X_1\cap V$, we have the following facts:
\begin{enumerate}
    \item
    $A$ outperforms $A'$ on $e_n$ (since $n\in X_1$).
    \item
    $e_n$'s first observation is even (since $n\in V$).
    \item
    $A\oplus B$ acts exactly like $A$ on $e_n$ (since $e_n$'s first observation is even).
    \item
    $A'\oplus B'$ acts exactly like $A'$ on $e_n$ (since $e_n$'s first observation is even).
\end{enumerate}
So for every $n\in X_1\cap V$, $A\oplus B$ outperforms $A'\oplus B'$ on $e_n$.
This shows $X_2\supseteq X_1\cap V$.
By Upward Closure, $X_2\in\mathscr U$, so $A\oplus B >_{E} A'\oplus B'$.
\end{proof}

Proposition \ref{EvenOddComboProposition} depends on the fact that we
compare intelligence only by where an agent outperforms,
without regard for the magnitude difference in rewards.
Otherwise, we could imagine a DLHA $A$ who performs slightly worse than $A'$
on even-numbered environments but makes up for it by clobbering $A'$ on
odd-numbered environments, making $A$ more intelligent than $A'$.
And we could imagine a DLHA $B$ who performs slightly worse than $B'$ on
odd-numbered environments but clobbers $B'$ on even-numbered environments.
Then $A\oplus B$ would perform worse than $A'\oplus B'$ everywhere, despite
each team-member being more intelligent than its counterpart.

The following definition generalizes Definition \ref{flexiblespecialcase}.

\begin{definition}
\label{flexiblecombo}
    Let $X$ be any set of reward-observation sequences and let $A$ and $B$ be DLHAs.
    The \emph{team combination of $A$ and $B$ given by $X$} is the DLHA
    $A\oplus_X B$ defined as follows.
    Suppose $r_1,o_1,\ldots,r_n,o_n$ is any observation-reward sequence.
    If for all $m\leq n$, $A(r_1,o_1,\ldots,r_m,o_m)=B(r_1,o_1,\ldots,r_m,o_m)$,
    then we declare $(A\oplus_X B)(r_1,o_1,\ldots,r_n,o_n)=A(r_1,o_1,\ldots,r_n,o_n)$.
    Otherwise, let $m\leq n$ be minimal such that
    $A(r_1,o_1,\ldots,r_m,o_m)\neq B(r_1,o_1,\ldots,r_m,o_m)$.
    We declare
    \[
        (A\oplus_X B)(r_1,o_1,\ldots,r_n,o_n) =
        \begin{cases}
            A(r_1,o_1,\ldots,r_n,o_n) & \mbox{if $(r_1,o_1,\ldots,r_m,o_m)\in X$},\\
            B(r_1,o_1,\ldots,r_n,o_n) & \mbox{otherwise}.
        \end{cases}
    \]
\end{definition}

In other words, $A\oplus_X B$ is the DLHA which has decided either to act as $A$,
or to act as $B$, but refuses to commit to one or the other until it is forced to.
In any particular deterministic environment, as long as the observations and
rewards are such that
$A$ and $B$ would act identically, then $A\oplus_X B$ acts in that way, without
committing to either one. Only when the observations and rewards are such that $A$
and $B$ would choose different actions, does $A\oplus_X B$ finally decide which DLHA
to follow in that environment, and it makes that decision based on whether or not
$X$ contains the observation-reward sequence which caused $A$ and $B$ to disagree.
Again, $A\oplus_X B$ is intuitively a type of ``team'' formed by $A$ and $B$.

The reader can easily check that if $X$ is the set of reward-observation sequences
with first
observation even, then $\oplus_X$, as given by Definition \ref{flexiblecombo},
is the same as $\oplus$ from Definition \ref{flexiblespecialcase}.

\begin{proposition}
\label{splittingproposition}
    For every electorate $E=(\vec{e},\mathscr U)$, DLHAs $A$ and $B$, and set $X$ of
    observation-reward sequences,
    $A\oplus_X B =_{E} A$ or $A\oplus_X B =_{E}B$.
\end{proposition}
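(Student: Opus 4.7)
The plan is to show that for every environment $e$, the play of $C := A \oplus_X B$ on $e$ coincides action-for-action either with the play of $A$ on $e$ or with the play of $B$ on $e$. Once this is established, the proposition follows from a standard ultrafilter bookkeeping argument like those in Lemmas \ref{firsteasylemma} and \ref{transitivityLemma}.

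First I would fix a deterministic environment $e$ and write $r^C_1,o^C_1,a^C_1,\ldots$ for the reward-observation-action sequence determined by $C$ on $e$, and similarly for $A$ and $B$. There are two cases. If for every $n$ we have $A(r^C_1,o^C_1,\ldots,r^C_n,o^C_n)=B(r^C_1,o^C_1,\ldots,r^C_n,o^C_n)$, then Definition \ref{flexiblecombo} makes $C$ act like $A$ at every step, and since this is also the action $B$ would take on the same history, an easy induction on $n$ shows $r^C_n=r^A_n=r^B_n$ and $o^C_n=o^A_n=o^B_n$ for all $n$; in particular $C$ obtains the same total reward on $e$ as both $A$ and $B$. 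Otherwise, let $m$ be the least $n$ such that $A$ and $B$ disagree on $r^C_1,o^C_1,\ldots,r^C_n,o^C_n$. Up through step $m$, $A$ and $B$ agreed on every proper initial subsequence, so an induction shows $r^C_i=r^A_i=r^B_i$ and $o^C_i=o^A_i=o^B_i$ for $i\le m$. The prefix $(r^C_1,o^C_1,\ldots,r^C_m,o^C_m)$ either belongs to $X$ or does not; by symmetry suppose it belongs to $X$. Then Definition \ref{flexiblecombo} forces $a^C_m=A(r^C_1,o^C_1,\ldots,r^C_m,o^C_m)=a^A_m$. A second induction, pushing forward step by step and observing that the minimal point of disagreement $m$ never moves and that the prefix at step $m$ is unchanged by later actions, then shows that $a^C_n=a^A_n$, $r^C_n=r^A_n$, and $o^C_n=o^A_n$ for all $n$, so $C$ and $A$ achieve the same total reward on $e$. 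The case where the prefix is not in $X$ is symmetric with $B$ in place of $A$.

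Having established the per-environment dichotomy, I would define
\begin{eqnarray*}
S_A &=& \{n\,:\,\mbox{$C$ and $A$ achieve the same total reward on $e_n$}\},\\
S_B &=& \{n\,:\,\mbox{$C$ and $B$ achieve the same total reward on $e_n$}\}.
\end{eqnarray*}
The previous paragraph gives $S_A\cup S_B=\mathbb N$. By Maximality, either $S_A\in\mathscr U$ or $S_A^c\in\mathscr U$; in the first case $C=_E A$ by definition of $=_E$, and in the second case $S_A^c\subseteq S_B$ together with Upward Closure yields $S_B\in\mathscr U$, so $C=_E B$.

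The main obstacle is entirely concentrated in the first paragraph: making the inductive argument precise so that the action of $C$ after the first disagreement really does mirror $A$ (or $B$) for all subsequent steps. The subtle point is that once $C$ starts diverging from, say, $B$, its own reward-observation history $r^C_1,o^C_1,\ldots$ is no longer the history $B$ would see, yet the \emph{minimal} disagreement index $m$ computed from that history stays fixed at the original value and the initial $m$-prefix stays inside $X$, so Definition \ref{flexiblecombo} keeps selecting $A$ forever. Once that small but slightly fiddly induction is nailed down, the ultrafilter step is immediate.
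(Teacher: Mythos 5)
Your proposal is correct and follows essentially the same route as the paper's own proof: establish that on each environment $A\oplus_X B$ plays identically to $A$ or identically to $B$, then finish with Maximality and Upward Closure. The only difference is that the paper dispatches the per-environment dichotomy with the phrase ``by construction,'' whereas you spell out the supporting induction (correctly, including the observation that the minimal disagreement index and its membership in $X$ are frozen once reached).
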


\begin{proof}
    Let
    \[
        S = \{n\,:\,\mbox{$A\oplus_X B$ acts identically to $A$ on $e_n$}\}.
    \]
    By Maximality, either $S\in\mathscr U$ or $S^c\in\mathscr U$.

    Case 1: $S\in\mathscr U$.
    By Upward Closure,
    \[
        \{n\,:\,\mbox{$A\oplus_X B$ gets the same reward as $A$ on $e_n$}\}
        \supseteq
        S
    \]
    is also in $\mathscr U$, so $A\oplus_X B =_{E} A$.

    Case 2: $S^c\in\mathscr U$.
    By construction, for every $n\in S^c$, $A\oplus_X B$ acts identically to $B$ on $e_n$.
    So by similar reasoning as in Case 1, $A\oplus_X B =_{E} B$.
\end{proof}

Recalling that our intelligence comparators were ultimately motivated by
social welfare functions,
Proposition \ref{splittingproposition} might best be understood through
the aphorism: ``Looking like Lincoln to Lincoln-voters is
enough to get you elected, regardless how you look to Douglas-voters.''
$A\oplus_X B$ looks exactly like $A$ to one set of voters, and exactly like
$B$ to the opposite set of voters.

At first glance, Proposition \ref{splittingproposition} seems incompatible with
proposals based on weighing environments by Kolmogorov complexity. Surprisingly,
the incompatibility is not as big as initially appears.
Kolmogorov complexity depends on a reference universal Turing machine, and one could
contrive a reference universal Turing machine that gives unfair Kolmogorov complexity
to (say) environments with odd-numbered initial observations. This would cause
such environments to be under-represented in Kolmogorov-complexity-based intelligence,
so that the intelligence of $A\oplus B$ would be approximately the same as that of
$A$.

\begin{proposition}
\label{GeneralizedComboProposition}
    (Compare Proposition \ref{EvenOddComboProposition})
    Let $E=(\vec{e},\mathscr U)$ be an electorate,
    let $X$ be an observation-reward sequence set,
    and let $A,A',B,B'$ be DLHAs.
    Assume the following:
    \begin{eqnarray*}
        A &{>_{E}}& A',\\
        A &{>_{E}}& B',\\
        B &{>_{E}}& A',\\
        B &{>_{E}}& B'.
    \end{eqnarray*}
    Then $A\oplus_X B>_{E} A'\oplus_X B'$.
\end{proposition}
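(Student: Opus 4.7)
The plan is to deploy Proposition \ref{splittingproposition} twice to collapse each team combination, modulo $=_E$, to one of its two members, and then to finish by invoking the appropriate one of the four hypotheses. Concretely, I would first apply Proposition \ref{splittingproposition} to $A\oplus_X B$, yielding that either $A\oplus_X B =_E A$ or $A\oplus_X B =_E B$; I would then apply it again to $A'\oplus_X B'$, yielding either $A'\oplus_X B' =_E A'$ or $A'\oplus_X B' =_E B'$.

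This produces a tidy $2\times 2$ case analysis. In each of the four cases, the representative chosen for $A\oplus_X B$ lies in $\{A,B\}$ and the representative chosen for $A'\oplus_X B'$ lies in $\{A',B'\}$, and the four hypotheses of the proposition are precisely tailored so that exactly one hypothesis ($A>_E A'$, $A>_E B'$, $B>_E A'$, or $B>_E B'$) handles each case. This is the whole conceptual content of the argument — the four hypotheses are designed to cover the four possible splits.

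The only mildly finicky step is that Lemma \ref{transitivityLemma} only states transitivity within each individual relation, whereas here I need the mixed chaining $U=_E V$, $V>_E W$, $W=_E Z$ $\Rightarrow$ $U>_E Z$. I would dispatch this in one line by the same ultrafilter argument used in the proof of Lemma \ref{transitivityLemma}: the three underlying index sets all lie in $\mathscr U$, their intersection lies in $\mathscr U$ by $\cap$-closure, on that intersection $U$ strictly outperforms $Z$, and upward closure then places $\{n : U \text{ outperforms } Z \text{ on } e_n\}$ into $\mathscr U$. With this mixed transitivity in hand, each of the four cases immediately yields $A\oplus_X B >_E A'\oplus_X B'$, completing the proof. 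I don't expect any real obstacle — the work was done in establishing Proposition \ref{splittingproposition} and Lemma \ref{transitivityLemma}.
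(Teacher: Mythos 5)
Your proposal is correct and is essentially identical to the paper's own proof, which likewise applies Proposition \ref{splittingproposition} to both team combinations and then dispatches the resulting four cases via the four hypotheses. Your explicit justification of the mixed chaining $U=_E V$, $V>_E W$, $W=_E Z$ $\Rightarrow$ $U>_E Z$ is a point of care that the paper leaves implicit, and your ultrafilter argument for it is valid.
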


\begin{proof}
    By Proposition \ref{splittingproposition},
    $A\oplus_X B =_{E}A$ or $A\oplus_X B =_{E}B$,
    and $A'\oplus_X B' =_{E}A'$ or $A'\oplus_X B' =_{E}B'$.
    In any of the four cases, $A\oplus_X B >_{E} A'\oplus_X B'$
    by one of the four corresponding hypotheses.
\end{proof}

Comparing Propositions \ref{EvenOddComboProposition} and \ref{GeneralizedComboProposition},
we see that as the team notion becomes more general, the necessary hypotheses
seem to become more demanding.

One can imagine many other ways of forming teams; the teamwork notions we have considered
here are narrow. We have not managed to obtain structural properties for other
notions of teamwork. This might reflect the non-monolithic nature of real-world
intelligence, of which our intelligence comparators are a rather monolithic approximation.

\subsection{Properties of Quitters}
\label{quittersSubSection}

In this section, we will prove a couple of structural properties about
DLHAs who quit playing once they achieve a certain total reward. In order to
arrive at these results, some preliminary definitions are needed.
First, we will formalize a notion of a DLHA skipping a turn and what
it means for a deterministic environment to respect that.

\begin{definition}
    (Action-Skipping)
    \begin{enumerate}
        \item
            By ``skip'', we mean the the natural number $0$ (considered as an action).
        \item
            A deterministic environment $e$ is said to \emph{respect skipping} if
            for every action-sequence $a_1,\ldots,a_n$, if $a_n=\mbox{``skip''}$
            and $e(a_1,\ldots,a_n)=(r,o)$, then $r=0$
            (informally: $e$ always gives $0$ reward in response to a ``skip''
            action).
        \item
            An electorate $E=(\vec{e},\mathscr U)$ is said to \emph{respect skipping}
            if each $e_i$ respects skipping.
    \end{enumerate}
\end{definition}

The structural properties we are aiming for in this section also require a notion
of deterministic environments having a limit on how big of a reward they can give at any one
time (for simplicity we will make that limit be $1$, although this is
not particularly important).

\begin{definition}
    (Bounded rewards)
    \begin{enumerate}
        \item
            A deterministic environment $e$ is said to have \emph{bounded rewards}
            if for every action-sequence $a_1,\ldots,a_n$,
            if $e(a_1,\ldots,a_n)=(r,o)$, then $r\leq 1$.
        \item
            An electorate $E=(\vec{e},\mathscr U)$ is said to have
            \emph{bounded rewards} if each $e_i$ has bounded rewards.
    \end{enumerate}
\end{definition}

\begin{definition}
Suppose $A$ is a DLHA and $r$ is a real number.
We define a new DLHA $A\mathord{\restriction}_r$ as follows.
For every reward-observation sequence $r_1,o_1,\ldots,r_n,o_n$,
\[
A\mathord{\restriction}_r(r_1,o_1,\ldots,r_n,o_n)
=
\begin{cases}
    A(r_1,o_1,\ldots,r_n,o_n) & \mbox{if $r_1+\cdots+r_n<r$,}\\
    \mbox{``skip''} & \mbox{if $r_1+\cdots+r_n\geq r$.}
\end{cases}
\]
\end{definition}

We can think of $A\mathord{\restriction}_r$ as a version
of $A$ that becomes satisfied as soon as it
has achieved a total reward of at least $r$, after which
point $A\mathord{\restriction}_r$ takes it easy, ignores the environment,
and performs nothing but ``skip'' forever after.

\begin{proposition}
\label{quitterproposition}
Let $E=(\vec{e},\mathscr U)$ be an electorate with bounded rewards, and assume
$E$ respects skipping.
Suppose $A$ and $B$ are DLHAs with $A>_{E}B$. Let $r\in\mathbb R$.
If $B\mathord{\restriction}_r=_{E}B$, then $A\mathord{\restriction}_{r+1}>_{E}B$.
\end{proposition}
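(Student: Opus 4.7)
The plan is to reduce the ultrafilter-level conclusion to a pointwise (environment-by-environment) comparison on a single $\mathscr U$-set that absorbs both hypotheses. First I would set $X_1 = \{n : A \text{ outperforms } B \text{ on } e_n\}$ and $X_2 = \{n : B\mathord{\restriction}_r \text{ earns the same reward as } B \text{ on } e_n\}$. By the hypotheses $A >_{E} B$ and $B\mathord{\restriction}_r =_{E} B$, both sets lie in $\mathscr U$, so $X_1 \cap X_2 \in \mathscr U$ by $\cap$-closure. The goal then reduces to showing the pointwise inclusion $X_1 \cap X_2 \subseteq Y$, where $Y = \{m : A\mathord{\restriction}_{r+1} \text{ outperforms } B \text{ on } e_m\}$; Upward Closure would then give $Y \in \mathscr U$, which is precisely $A\mathord{\restriction}_{r+1} >_{E} B$.

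To verify that inclusion, I would fix $n \in X_1 \cap X_2$ and split on whether the partial sums of $A$'s rewards on $e_n$ ever reach $r+1$. If they never do, then $A\mathord{\restriction}_{r+1}$'s skipping clause is never triggered, so it produces the same action sequence, and therefore the same total reward, as $A$; since $n \in X_1$, this beats $B$. If they do reach $r+1$ at some first step $k$, then $A\mathord{\restriction}_{r+1}$ mirrors $A$ through step $k$ and collects $r_1+\cdots+r_k \geq r+1$, after which it skips forever, each subsequent reward being $0$ by the respect-skipping hypothesis; so its total on $e_n$ is at least $r+1$. On the $B$-side, I would use $n \in X_2$ to replace $B$'s total by $B\mathord{\restriction}_r$'s and then use bounded rewards to cap the latter strictly below $r+1$: either $B$'s partial sums never reach $r$ and the common total is at most $r$, or they first reach $r$ at some step $k'$, in which case $(r_1+\cdots+r_{k'-1}) + r_{k'} < r + 1$ because the first summand is less than $r$ and the second is at most $1$.

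The main obstacle is this last pointwise comparison: the argument has to knit together a lower bound of $r+1$ for $A\mathord{\restriction}_{r+1}$'s total with a strict upper bound of $r+1$ for $B\mathord{\restriction}_r$'s total, and the bounded-rewards hypothesis is doing real work only once one has carefully identified the step at which each agent's running total first crosses its threshold. The $+1$ offset in the statement is precisely what that case analysis needs, since bounded rewards say the overshoot past $r$ is under $1$, while reaching $r+1$ secures at least $r+1$ on the other side. Once the pointwise comparison is secured, the ultrafilter step is a routine application of $\cap$-closure and Upward Closure.
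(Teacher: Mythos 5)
Your proposal is correct and follows essentially the same route as the paper's own proof: intersect the two $\mathscr U$-sets supplied by the hypotheses $A>_E B$ and $B\mathord{\restriction}_r=_E B$, establish the pointwise comparison on that intersection, and conclude by Upward Closure. Your case split on whether $A$'s \emph{running} reward total ever reaches $r+1$ (rather than on $A$'s final total, as the paper phrases it) is in fact the more careful formulation, since it is the running total that triggers the skipping clause.
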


\begin{proof}
    Let
    \[
        X_1=\{n\,:\,\mbox{$B\mathord{\restriction}_r$ and $B$ get the same reward on $e_n$}\}.
    \]
    Since $B\mathord{\restriction}_r=_{E}B$, $X_1\in \mathscr U$.
    Let
    \[
        X_2=\{n\,:\,\mbox{$A$ outperforms $B$ on $e_n$}\}.
    \]
    Since $A>_{E}B$, $X_2\in\mathscr U$.
    By $\cap$-Closure, $X_1\cap X_2\in\mathscr U$.
    If I can show that $A\mathord{\restriction}_{r+1}$ outperforms
    $B$ on $e_n$ whenever $n\in X_1\cap X_2$,
    the proposition will be proved.

    Let $n\in X_1\cap X_2$.
    Since $n\in X_1$, $B$ and $B\mathord{\restriction}_r$ get the same reward on $e_n$.
    By construction, $B\mathord{\restriction}_r$ must get less than $r+1$ total reward on $e_n$
    (because $E$ has bounded rewards,
    so $e_n$ never gives a reward larger than $1$,
    and as soon as $B\mathord{\restriction}_r$ gets $r$ or more total reward,
    it begins playing ``skip'' forever after, which causes $e_n$ to give
    reward $0$ forever after, since $E$ respects skipping).
    Therefore, $B$ must get less than $r+1$
    total reward on $e_n$.
    Since $n\in X_2$, we know that $A$ outperforms $B$ on $e_n$. There are two cases.

    Case 1: $A$ does not get $r+1$ or more reward on $e_n$. Then by construction
    $A\mathord{\restriction}_{r+1}$ acts exactly like $A$ on $e_n$, so
    outperforms $B$ on $e_n$, as desired.

    Case 2: $A$ does get $r+1$ or more reward on $e_n$. It follows
    that $A\mathord{\restriction}_{r+1}$
    gets $r+1$ or more reward on $e_n$. Since we already
    established that $B\mathord{\restriction}_r$
    must get less than $r+1$ reward on $e_n$, this shows that $A\mathord{\restriction}_{r+1}$
    outperforms $B\mathord{\restriction}_r$ on $e_n$, and hence outperforms $B$ on $e_n$ since
    $B$ and $B\mathord{\restriction}_r$ get the same total reward on $e_n$.
\end{proof}

Intuitively, the way to think of Proposition \ref{quitterproposition} is as follows.
The fact that $B=_{E}B\mathord{\restriction}_r$ implies that
in ``almost every environment'' (or ``in an election-winning bloc of environments''),
$B$'s total reward is less than $r+1$.
So if $A$'s only objective is to beat $B$, $A$ might as well relax and take it easy
any time $A$ has already achieved at least $r+1$ total reward: by that point, $A$ has
already beaten $B$ on the environment in
question (with ``almost no environments'' being exceptions).

\begin{definition}
    A deterministic environment $e$ is \emph{merciful} if
    the rewards it outputs are never negative.
    An electorate $E=(\vec{e},\mathscr U)$ is \emph{merciful}
    if each $e_i$ is merciful.
\end{definition}

\begin{proposition}
\label{trivialquitterproposition}
    Let $E=(\vec{e},\mathscr U)$ be a merciful electorate that respects skipping.
    For every DLHA $A$ and real number $r$, $A\not<_{E}A\mathord{\restriction}_r$.
\end{proposition}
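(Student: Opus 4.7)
The plan is to show that on \emph{every} environment in the sequence $\vec{e}$, the agent $A$ earns at least as much total reward as $A\mathord{\restriction}_r$; then the set of $n$ where $A\mathord{\restriction}_r$ outperforms $A$ is empty, and emptiness together with Properness of $\mathscr U$ gives $A\not<_E A\mathord{\restriction}_r$.

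First I would fix an arbitrary $e_n$ and compare the two reward-observation-action sequences, one produced by letting $A$ play in $e_n$ and one produced by letting $A\mathord{\restriction}_r$ play in $e_n$. Call them $s_1,o_1,a_1,\ldots$ and $s'_1,o'_1,a'_1,\ldots$ respectively. If the partial sums $s'_1+\cdots+s'_i$ never reach $r$, then a straightforward induction (using the definition of $A\mathord{\restriction}_r$ together with the determinism of $e_n$) shows the two sequences coincide, so the total rewards are equal. Otherwise, let $N$ be the least index with $s'_1+\cdots+s'_N\geq r$. By induction up to step $N$, $A\mathord{\restriction}_r$ selects exactly the same actions as $A$, so $s'_i=s_i$ for $i\leq N$. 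From step $N+1$ onward, $A\mathord{\restriction}_r$ outputs ``skip'', and because $e_n$ respects skipping, all subsequent rewards $s'_i$ equal $0$; thus $A\mathord{\restriction}_r$'s total reward is exactly $s_1+\cdots+s_N$.

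Meanwhile $A$ keeps playing whatever it plays, producing rewards $s_{N+1},s_{N+2},\ldots$, and mercifulness of $e_n$ forces each of these to be $\geq 0$. Therefore $A$'s total reward is $s_1+\cdots+s_N+\sum_{i>N}s_i\geq s_1+\cdots+s_N$, which is exactly $A\mathord{\restriction}_r$'s total reward. In either case, $A$ does not \emph{strictly} underperform $A\mathord{\restriction}_r$ on $e_n$.

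Since $n$ was arbitrary, the set $\{n:A\mathord{\restriction}_r\text{ outperforms }A\text{ on }e_n\}$ is empty, hence not in $\mathscr U$ by Properness (Definition \ref{ultrafilterdefn}), which by definition means $A\not<_E A\mathord{\restriction}_r$. The main obstacle, if any, is purely notational: keeping the two (possibly diverging) play-sequences straight and verifying that $A\mathord{\restriction}_r$ and $A$ really do agree step-by-step up to the quitting point $N$, using that the environment is deterministic and that both DLHAs see identical prefix histories until the threshold is crossed. Once that bookkeeping is in place, mercifulness and respects-skipping immediately yield the inequality of total rewards.
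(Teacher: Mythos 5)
Your proof is correct and follows the same route as the paper's: show that on \emph{every} single environment $A$ earns at least as much total reward as $A\mathord{\restriction}_r$, so the set where $A\mathord{\restriction}_r$ outperforms $A$ is empty and cannot lie in $\mathscr U$ by Properness. The paper compresses this to one sentence citing only mercifulness; your version usefully makes explicit that the respects-skipping hypothesis is also needed (to force $A\mathord{\restriction}_r$'s post-quitting rewards to be exactly $0$ rather than merely nonnegative), but the underlying argument is the same.
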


\begin{proof}
    Since $E$ is merciful,
    it follows that $A$ cannot possibly get less total reward than $A\mathord{\restriction}_r$
    on any $e_n$, and the proposition trivially follows.
\end{proof}

The proof of Proposition \ref{trivialquitterproposition} would not go through if
the deterministic environments were not merciful.
This brings the following interesting fact to our attention:
if deterministic environments are allowed to give out punishments,
then the relative intelligence of two DLHAs really ought to depend on
context, namely, on how risk-averse we are.
After all, if $A$ is a DLHA, then who
is more intelligent: $A$ or $A\mathord{\restriction}_1$?
If it is better to safely achieve a total reward of at
least $1$, at the price of forgoing higher rewards with a higher risk,
then $A\mathord{\restriction}_1$
should be more intelligent, because in some deterministic environments,
$A$ could temporarily achieve rewards adding to $\geq 1$, but then later
receive large punishments. On the other hand, if
it is better to achieve highest-possible rewards, even at risk of being punished
in some environments, we might prefer $A$, because there may be many deterministic environments
where $A\mathord{\restriction}_1$ will fall well short of $A$.
Risk-aversion is, itself, already a potentially multi-dimensional parameter, and we
expect it is probably only one of many hidden parameters
underlying intelligence. The choice of an ultrafilter in Definition \ref{MainDefinition}
serves as a convenient catch-all parameter which, we think, might serve to wrap up all
these different complicated parameters into one.

\section{Discussion}
\label{objectionsection}

In this section, we will discuss some objections to our comparators and to
Legg and Hutter's agent-environment model. We thank an anonymous reviewer for
these objections.

\subsection{What does this definition really have to do with intelligence?}

Out of context, Definition \ref{MainDefinition} might not appear to have
anything to do with intelligence. There are two steps to see its relationship
to intelligence. First, that the agent-environment model
(Definition \ref{EnvironmentAndAgent}) is closely related to a type of
intelligence; second, that Definition \ref{MainDefinition} is closely related
to the problem of using Definition \ref{EnvironmentAndAgent} to compare intelligence.

\citet{hutter2007} have done a fine job arguing for the
close relationship between the agent-environment model and a type of intelligence.
We will add the following.
Whatever intelligence is, it surely involves components such as creativity,
pattern recognition, the ability to abstract and to extrapolate, etc.
Whether or not a DLHA is capable of creativity and so on is beyond the scope of
this paper, but it seems clear that there are deterministic environments
\emph{intended}\footnote{We emphasize the intention part
of this argument because to assume that an environment can \emph{succeed} in
rewarding creativity would be begging the question.} to reward creativity, or
to reward pattern
recognition, or to reward other specific components of intelligence.
To consistently extract good rewards from many such environments, a DLHA would
either need to possess
those components of intelligence, or at least appear to possess them for
practical intents and purposes. To quote Socrates:
``But, not possessing right judgment, you would not realize that you are
enjoying yourself even while you do, and, being unable to calculate,
you could not figure out any future pleasures for yourself''
\citep{philebus}.

It remains to show how Definition \ref{MainDefinition} is closely related to
the problem of using Definition \ref{EnvironmentAndAgent} to compare intelligence.
This is a matter of summarizing the argument from Section \ref{maindefinitionsection}.
\begin{itemize}
    \item
    We would like to use the relative performance of DLHAs in deterministic environments
    to determine which DLHAs are more intelligent. More intelligent DLHAs should
    win higher rewards, not in every deterministic environment, but in some aggregate
    sense across a whole universe of
    deterministic environments. This idea sounds great, but
    it is unclear how to realize it.
    \item
    We view the problem from a different angle: rather
    than saying a DLHA with higher intelligence achieves higher rewards
    on average from deterministic environments, instead say that deterministic
    environments \emph{elect} more intelligent machines. For, what else can it
    possibly be when a winner is chosen by the aggregated judgments of many
    judges? In this way, we obtain a bridge linking AGI to election theory, a literature
    from which we can import centuries of research.
    The question ``Who is more intelligent?'' is transformed into the
    question ``Who wins the election?''
\end{itemize}

Kirman and Sondermann exhaustively characterized ways of determining election
winners, under additional constraints. It is debateable whether those
additional constraints are appropriate, but if we assume they are,
then Definition \ref{MainDefinition} is immediate. Whether or not the constraints
are appropriate, Kirman and Sondermann can be used as a heuristic in order to
obtain a definition. If the constraints are appropriate, our definition might be
the only possible definition, subject to those constraints!
If the constraints are not appropriate, then our definition is still better than
nothing, even if someday a better definition is found (one which would necessarily
have to violate those constraints).


\subsection{Why give a new definition if Legg and Hutter already gave one?}

The intelligence measure defined by Legg and Hutter is important because it showed
that it is in principle possible to formalize an abstract measurement of intelligence.
And it is more computationally tractable than our comparators.
There are at least two problems with the specific measure which Legg and Hutter proposed.
\begin{enumerate}
\item
Legg and Hutter's measure depends on a
reference universal Turing machine \citep{leike2015bad}.
\item
Legg and Hutter's measure is biased toward agents who perform well in simple environments.
For example, almost all environments have Kolmogorov complexity $>100$,
but the Legg-Hutter intelligence of an agent is almost completely determined by its
performance on environments with Kolmogorov complexity $\leq 100$.
\end{enumerate}
We are skeptical that any nontrivial, non-contrived
\emph{structural properties} will be proven about Legg and Hutter's intelligence measure.
Any such property would either have to fundamentally hinge on the specific
choices that went into their definition (which we think unlikely),
or else be so general as to work in spite of those specific choices
(any such property would surely be weak).

\subsection{What about emotions, emotional intelligence, irrational intelligence, etc?}

This paper is concerned with pragmatic intelligence, appropriate for the situation where one
is evaluating a general-purpose utility-maximizing AI intended to be used in many different
well-defined, rational, emotionless tasks. Our definition does not account for
emotions (if any) in the machine, nor for what the machine knows or believes
or understands (if anything), except indirectly to the extent that such emotions,
knowledge, beliefs, or understanding help the machine to better extract rewards
from reward-giving environments.
It could potentially be dangerous to apply these definitions to human beings.

\subsection{Not all instances of intelligence are ordinarily comparable}

For any particular electorate, Definition \ref{MainDefinition} makes all
DLHAs comparable, but we should stress that if $E$ and $E'$ are distinct
electorates, then $A>_EB$ does not generally have any bearing on whether
or not $A>_{E'}B$. Thus, the fact that not all instances of intelligence
are ordinarily comparable is actually evidence in \emph{favor} of the
parametrized-family approach of this paper. Different electorates might
measure different types of intelligence, and a particular DLHA might
be more intelligent according to one electorate and less intelligent
according to another.

\subsection{What about scholars who do not consider intelligence amenable to definition?}

We understand that some scholars consider intelligence immune to
definition, end-of-story. At first glance this paper
might seem opposed against those
scholars. But in actuality, this paper \emph{reconciles} those
scholars with scholars who think intelligence can be defined.
Except for degenerate edge-cases, the intelligence comparators of
Definition \ref{MainDefinition} depend on a \emph{free ultrafilter}.
Mathematical logicians have formally proven that, although free ultrafilters
exist, it is impossible to exhibit one. Thus in a real and
formal sense we vindicate the non-definition scholars, while at the same
time showing (by proving some structural properties) that
all is not lost.

\subsection{Concerns about infinity}

In the Legg-Hutter model, an agent interacts with an environment for an eternity.
Readers might be concerned that this is inaccurate, considering that our universe
will eventually end, etc. Likewise, we work over the whole infinite space of
all hypothetical environments, even though only finitely many environments can
exist in reality (if our universe has only finitely many atoms and those
atoms can only be arranged in finitely many ways). We acknowledge these shortcomings,
but we defend them by comparing the situation to that of Turing machines. Turing
machines, almost universally accepted as a model of computation, are
equipped with infinite memory-tapes and allowed to run for eternity. And
computability theorists routinely consider the entire infinite space of hypothetical
Turing machines, even though a finite, discrete universe can only
instantiate finitely many of them.
Infinity serves as an important simplifying assumption. Without such simplifying
assumptions, computer science would be filled with arbitrary decisions about
how many bits of memory the idealized computer actually has, how long
algorithms can run before the heat-death of the universe, etc. Considering
how little we really know about these questions, and how rapidly our opinions about
them change, we opine that the resulting science would be weakened.

For additional remarks about the justification and usefulness of infinity-based
simplifying assumptions in (biological) science, see section 2 of
\cite{alexander2013}.

\subsection{Drawing conclusions from environments that don't actually exist}

As a followup to the previous subsection about infinity, the reader might
question how relevant it is to consider environments that cannot possibly occur
(due to, e.g., the finiteness of our universe) when comparing intelligence.
We would like to propose a thought experiment. Suppose a passenger
trapped in the sinking Titanic is a world-champion chess-player. Arguably,
in the moments before the Titanic sinks, it is virtually impossible that this
passenger will ever play another game of chess. Should that mean
that their chess abilities are irrelevant to their intelligence? The answer could
go either way, and it might indeed be interesting to study notions of intelligence
in which the model takes into account what sort of scenarios the agent can
possibly find itself in. In this paper, we take the tack that being a world-champion
chess-player \emph{is} relevant to one's intelligence, even when one is in a
predicament that makes any future chess games impossible. We should stress that
this thought experiment only serves as an analogy, and this paper is not really
appropriate for comparing the intelligence of human beings, but rather the
intelligence of DLHAs.

\subsection{Shouldn't the model consider future discounting?}

Readers might object that rewards early-on in a DLHA's encounter with a deterministic
environment should carry more weight than rewards later-on.

In our opinion, time discounting is a complicated proposition, packed with
arbitrary decisions of an infinite-dimensional nature, none of which (according to what
we know so far) have any intrinsic relationship to intelligence.
Just as, in physics, an equation can be simplified by replacing a product of
many physical constants by one single new constant, in the same way,
the ultrafilter in Definition \ref{MainDefinition} serves as an elegant master-parameter
which hides within it many complicated choices and parameters, including those
related to future discounting. For example, an ultrafilter might be chosen which
gives preference to environments whose rewards taper off exponentially.
As a general principle, any time a thing must unavoidably depend on arbitrary
choices, the philosopher should define that thing with those choices parametrized away,
rather than embedding them within the definition itself.

\subsection{Could this definition be used to compare intelligence of weather systems,
stars, etc.?}

Definition \ref{MainDefinition} is a formal definition which applies to
DLHAs (i.e., to functions with specific domain and range), not to things like
weather systems or stars. Possibly, a DLHA could be contrived which takes
actions somehow derived from the behavior of a formalized weather system or star.
We conjecture that according to most non-contrived electorates, such a DLHA
would compare unfavorably against any decent AGI.


\section{Open Questions about other intelligence proposals}
\label{openquestionsection}

Even if the reader completely disagrees that our proposal has anything to do with
intelligence, nevertheless, the results we have obtained still serve as
a useful source of open questions which can be asked about other proposals for
measuring the intelligence of Legg-Hutter-style agents.
If nothing else, answers to these questions could ground a rich comparative
vocabulary for the discussion of intelligence measurement systems.

\subsection{Legg and Hutter universal intelligence}

For every agent $A$, let $\varGamma(A)$ denote the so-called universal intelligence of $A$
as defined by \citet{hutter2007}.

\begin{question}
\label{FirstOpenQuestion}
    (Compare Proposition \ref{EvenOddComboProposition})
    Let $A,A',B,B'$ be agents with $\varGamma(A)>\varGamma(A')$,
    $\varGamma(B)>\varGamma(B')$.
    Is it necessarily true that $\varGamma(A\oplus B)>\varGamma(A'\oplus B')$?
\end{question}

\begin{question}
    (Compare Proposition \ref{EvenOddComboProposition})
    Let $A,A',B,B'$ be agents
    with $\min(\varGamma(A),\varGamma(B))>\max(\varGamma(A'),\varGamma(B'))$.
    Is it necessarily true that $\varGamma(A\oplus B)>\varGamma(A'\oplus B')$?
\end{question}

\begin{question}
    (Compare Proposition \ref{GeneralizedComboProposition})
    Let $A,A',B,B'$ be agents
    with $\min(\varGamma(A),\varGamma(B))>\max(\varGamma(A'),\varGamma(B'))$,
    and let $X$ be a set of reward-observation sequences.
    Is it necessarily true that $\varGamma(A\oplus_X B)>\varGamma(A'\oplus_X B')$?
\end{question}

\begin{question}
\label{LastOpenQuestion}
    (Compare Proposition \ref{quitterproposition})
    Let $\varGamma'$ be the same as $\varGamma$ except that $\varGamma'$ only considers
    environments which respect skipping and have bounded rewards.
    Suppose $A$ and $B$ are agents with $\varGamma'(A)>\varGamma'(B)$.
    Let $r\in\mathbb R$
    be such that $\varGamma'(B\mathord{\restriction}_{r})=\varGamma'(B)$.
    Is it necessarily true that
    $\varGamma'(A\mathord{\restriction}_{r+1})>\varGamma'(B)$?
\end{question}

\subsection{Hern\'andez-Orallo and Dowe universal intelligence}

Various proposals for so-called universal intelligence are given by \citet{hernandez}.

\begin{question}
    What are the answers to Questions \ref{FirstOpenQuestion}--\ref{LastOpenQuestion}
    if $\varGamma$ denotes the various intelligence definitions
    given by \citet{hernandez}?
\end{question}

\subsection{Hibbard universal intelligence}

Hibbard has proposed \citep{hibbard} a definition of so-called universal intelligence
of quite a different flavor.

\begin{question}
\label{ReallyLastOpenquestion}
    What are the answers to Questions \ref{FirstOpenQuestion}--\ref{LastOpenQuestion}
    if $\varGamma$ denotes the intelligence measure given by
    \citet{hibbard}?
\end{question}

\section{Conclusion}
\label{conclusionsection}

Following \citet{hutter2007}, we considered an abstraction in
which agents take actions within environments, based on the observations and
rewards they receive from those environments.
Unlike Legg and Hutter, we constrained both agent and environment to be
deterministic.
Various authors
\citep{hutter2007, hernandez, hibbard} have proposed
ingenious means of quantifying the intelligence
of such agents.
Informally, the goal of such a measurement is that
an agent with higher universal intelligence should realize larger rewards
over the (infinite) space of environments, in some aggregate
sense.

We proposed a new approach to comparing the intelligence of
deterministic Legg-Hutter agents (DLHAs), a more
abstract approach which lends itself to proving certain structural properties
about intelligence. Rather than measure the universal intelligence of a
DLHA, we focused on a closely related problem: how to compare the relative
intelligence of two DLHAs. Our approach is based on the following
insight: deterministic environments can be treated as voters in an election to determine
which DLHA (if either) is more intelligent. Each environment votes for that
DLHA which scores the highest reward in that environment (or votes that the
two DLHAs are equally intelligent, if both DLHAs score equally in it).
This realization provides an exciting bridge connecting the budding field of
Legg-Hutter-style intelligence to
the mature field of election theory.
In particular,
\citet{kirman} have completely characterized
reasonable solutions to the infinite-voter election problem in terms of \emph{ultrafilters},
a device from mathematical logic. This led us to an elegant means
of using ultrafilters to compare intelligence of DLHAs.

The intelligence comparators we arrived at are elegant enough that
we were able to prove some structural properties about how DLHAs'
intelligences are related \emph{in general}. For example, if $A$ and $B$ are
DLHAs, more intelligent than DLHAs $A'$ and $B'$ respectively, we proved
(Proposition \ref{EvenOddComboProposition}) that
$A\oplus B$ is more intelligent than $A'\oplus B'$, where $\oplus$ is an
operator which takes two DLHAs and outputs a new DLHA which can roughly
be thought of as a ``team'' made up of $A$ and $B$
(Definition \ref{flexiblespecialcase}).
In short (although this is an oversimplification):
``If a team's members are more intelligent, then that team is more intelligent.''

Our comparators are purely theoretical and not useful
for doing concrete computations. Nevertheless, the structural
properties we are able to prove about these theoretical comparators
are a useful source of open questions about other Legg-Hutter-style
intelligence approaches (Questions \ref{FirstOpenQuestion}--\ref{ReallyLastOpenquestion}).
Our hope is that this will inspire more research on Legg-Hutter-style intelligence
even among readers who disagree about our particular proposal.

\section*{Acknowledgements}

We acknowledge Adam Bloomfield, Jordan Fisher,
Jos{\'e} Hern{\'a}ndez-Orallo,
Bill Hibbard, Marcus Hutter, Peter Sunehag, and the reviewers
for feedback and discussion.

\bibliography{biblio}

\end{document}